\def \x {\mathbf{x}}
\newtheorem{thm}{Theorem}
\newtheorem{cor}{Corollary}
\newtheorem{definition}{Definition}
\def\BibTeX{{\rm B\kern-.05em{\sc i\kern-.025em b}\kern-.08em
    T\kern-.1667em\lower.7ex\hbox{E}\kern-.125emX}}
\begin{document}

\title{NPRL: Nightly Profile Representation Learning for Early Sepsis Onset Prediction in ICU Trauma Patients 
\thanks{*Stewart and Hu's research is supported in part by NSF (IIS-2104270). Teredesai's research is supported in part by CueZen Inc. We would like to thank Microsoft Inc., and University of Washington eScience Institute - Azure Cloud Computing Credits for Research and Teaching grant for their support by providing us with the Cloud Compute Resources on Microsoft Azure. All opinions, findings, conclusions and recommendations in this paper are those of the author and do not necessarily reflect the views of the funding agencies.}
}
\IEEEpubid{
    \makebox[\columnwidth]{979-8-3503-2445-7/23/\$31.00 ~\copyright2023 IEEE \hfill}
    \hspace{\columnsep}\makebox[\columnwidth]{}
}

\author{

Tucker Stewart$^1$\quad Katherine Stern$^2$\quad Grant O'Keefe$^2$\quad Ankur Teredesai$^{1,3}$\quad Juhua Hu$^1$\IEEEauthorrefmark{2}\thanks{\IEEEauthorrefmark{2}Corresponding author}  \\
$^1$ School of Engineering and Technology, University of Washington, Tacoma, USA\\
$^2$ Department of Surgery, University of Washington, Seattle, USA\\
$^3$ CueZen Inc., Seattle, USA\\
\{trstew, kstern, gokeefe, ankurt, juhuah\}@uw.edu \\
}

\maketitle

\begin{abstract}
Sepsis is a syndrome that develops in the body in response to the presence of an infection. Characterized by severe organ dysfunction, sepsis is one of the leading causes of mortality in Intensive Care Units (ICUs) worldwide. These complications can be reduced through early application of antibiotics. Hence, the ability to anticipate the onset of sepsis early is crucial to the survival and well-being of patients. Current machine learning algorithms deployed inside medical infrastructures have demonstrated poor performance and are insufficient for anticipating sepsis onset early. Recently, deep learning methodologies have been proposed to predict sepsis, but some fail to capture the time of onset (e.g., classifying patients' entire visits as developing sepsis or not) and others are unrealistic for deployment in clinical settings (e.g., creating training instances using a fixed time to onset, where the time of onset needs to be known apriori). In this paper, we first propose a novel but realistic prediction framework that predicts each morning whether sepsis onset will occur within the next 24 hours using the most recent data collected the previous night, when patient-provider ratios are higher due to cross-coverage resulting in limited observation to each patient. However, as we increase the prediction rate into daily, the number of negative instances will increase, while that of positive instances remain the same. This causes a severe class imbalance problem making it hard to capture these rare sepsis cases. To address this, we propose a nightly profile representation learning (NPRL) approach. We prove that NPRL can theoretically alleviate the rare event problem and our empirical study using data from a level-1 trauma center demonstrates the effectiveness of our proposal.
\end{abstract}

\begin{IEEEkeywords}
early sepsis onset prediction, ICU trauma patients, nightly patient profile, rare event, profile representation learning 
\end{IEEEkeywords}

\section{Introduction}
\label{sec:introduction}
Sepsis is a syndrome of a host's dysregulated immune response to the presence of an infection. The host immune response is not specific in its mobilization against pathogens in the body, that it starts to attack the host's own tissue as well, causing organ dysfunction and tissue damage. To this day, sepsis remains to be a prominent complication in the modern medical facilities, particularly ICUs. According to a global audit conducted in \cite{sakr:2018}, depending on the region studied, 13.6\% to 39.3\% of patients admitted in ICU are affected by sepsis. Globally, this proportion is 29.5\%. Of the patients that develop sepsis in the ICU, up to 26\% will die likely due to sepsis. From this, it can be estimated that approximately 7.6\% of patients admitted into ICUs across the globe will die as a result of sepsis, making it one of the leading causes of death in ICUs. Moreover, sepsis also contributes to long term morbidity and mortality. About one-third of ICU sepsis survivors develop persistent and prolonged organ dysfunction, a syndrome commonly recognized as chronic critical illness, characterized by persistent immune suppression~\cite{mira:2017-epidemiology, stortz:2018-benchmarking, stortz:2018-evidence}, muscle wasting\cite{hawkins:2018-chronic}, and recurrent infections~\cite{wang:2014-subsequent}.  Survivors experience high rates of sepsis recidivism, hospital readmission, deficits in physical and cognitive function~\cite{gardner:2019-development}, and increased 1-year and 5-year all-cause mortality~\cite{ou:2016-long}.  

These outcomes can be improved through early intervention \cite{liu:2019}. Early administration of antibiotics treats the underlying infection and prevents the progression of sepsis-related organ dysfunction. It is estimated that as much as 80\% of septic shock-related deaths are preventable with early intervention and chances of survival decrease about 7.6\% each hour that action is not taken \cite{kumar:2006}. Hence, it is critical to the survival and well-being of patients to anticipate the onset of sepsis accurately and early. Although patients who acquire sepsis during hospitalization (i.e., hospital-acquired sepsis) are at greatest risk for sepsis-associated morbidity and mortality~\cite{page:2015-community}, early detection is challenging in these groups. In critically ill trauma patients (i.e., individuals admitted to the ICU for management of injury caused by blunt or penetrating force), injury-related inflammation and organ dysfunction may increase the risk for sepsis, while also masking the clinical signs of infection~\cite{eguia:2019-risk, eriksson:2019-comparison}. Therefore, detecting sepsis early in the critically ill trauma population is in great need but challenging, which is the focus of this work. 

Due to the abundance of available data, many have looked into machine learning using patients' Electronic Health Records (EHRs) to predict sepsis or septic shock. However, many prior studies only work retrospectively~\cite{khoshnevisan:2021}. Specifically, many studies like \cite{futoma:2017-improved, khojandi:2018, liu:2019, scherpf:2019, lauritsen:2020-explainable, ramchand:2020, khoshnevisan:2021} identify the timestamp of the target event such as sepsis onset or septic shock, and then look back in time for a fixed time interval for early prediction. As such, the time length between prediction and the target event stays fixed. This approach is less useful prospectively for deployment because clinicians often do not know in advance when sepsis will actually onset. Therefore, in practice, it would be difficult to decide how and when to use such ML models for prediction. For example, we may need to use the model very frequently (e.g., every hour), so that we do not miss any chance of early prediction.  This is not realistic in a live setting. In addition, some prior approaches like \cite{khojandi:2018, liu:2019, lauritsen:2020-explainable, khoshnevisan:2021} created one data sample per hospital admission, not multiple instances such as for each day the patient is in the hospital. In these cases, the model does not discriminate between days, where sepsis is or is not present in the patient. Thus, they do not capture the time of onset as well, which is a major shortcoming we address in our work.

In this paper, we propose a novel but more realistic prediction setup for deployment in hospitals. Specifically, we use data from each night when patients have limited observations from doctors and re-assess the potential for sepsis onset the following day. In this manner, we position the classifier as a diagnostic tool similarly as lab tests rather than an auxiliary alarm system. Then, instead of labeling the entire visit to have sepsis (i.e., between 13.6\% and 39.3\% in the ICU) or not, each day within the visit is examined for the first potential occurrence of sepsis onset. Time prior to sepsis onset is treated as time at risk. However, sepsis examples become rare compared to negative examples, resulting in a severe class imbalance problem (e.g., $<2\%$ sepsis cases in the level-1 trauma center data). Without any help to address this rare event problem, machine learning models can be easily mislead to predict every example to be negative, which is useless. 

In previous sepsis prediction studies~\cite{khojandi:2018, moor:2019, kaji:2019, lauritsen:2020-early, lauritsen:2020-explainable}, the class imbalance problem was often addressed by simply resampling the training data. For example, random oversampling achieves balance by repeating minority examples, but often leads to overfitting those examples in the minor class. On the other hand, random undersampling removes major examples, but results in information loss. Later, \cite{ramchand:2020} proposed to use ensemble to learn various characteristics from data to alleviate the class imbalance problem, which however is very expensive. More importantly, our class imbalance problem is way more serious. Recently, due to the development of deep neural networks, deep learning has outperformed in various aspects. In terms of sepsis prediction, deep learning architectures such as Recurrent Neural Networks (RNNs)~\cite{kaji:2019, liu:2019, scherpf:2019, ramchand:2020} and Temporal Convolutional Neural Networks (TCNs)~\cite{lauritsen:2020-explainable, moor:2019}, which can capture temporal and sequential patterns significantly outperformed traditional machine learning methods. Moreover, deep learning has also helped advance techniques to address the class imbalance problem~\cite{johnson:2019-survey}. For example, in computer vision, \cite{cui:2019-class} proposed a balanced loss that weights the loss for a particular class inversely proportional to number of instances in that class and \cite{liu2022selfsupervised} showed that self-supervised representations are more robust to class imbalance than supervised representations. However, this has been rarely studied in early sepsis onset prediction, where EHR data are totally different from images in computer vision.

Therefore, to do nightly sepsis prediction resulting in rare sepsis events, we propose to do self-supervised Nightly patient Profile Representation Learning (NPRL). During the representation learning, we aim to capture the unique perspectives of different patients at different nights. We theoretically prove that such representations learned from self-supervised learning can preserve the diversity of patient profiles at different nights, and thus examples from the minor class will not be dominated by those from the major class. Our empirical study conducted on traumatic patients from a level-1 trauma center demonstrates the effectiveness of our proposal. The main contributions of this work can be summarized as follows.

\begin{itemize}
    \item We propose a novel nightly prediction setup for sepsis in ICU trauma patients that is more deploy-able in prospective clinical environments of ICUs and meets the needs of ICU staff.
    \item We develop a self-supervised nightly patient profile representation learning method to capture unique perspectives of different patients at different nights, as to alleviate the severe class imbalance problem with a theoretical guarantee.
    \item We demonstrate the effectiveness of our proposal using data from a level-1 trauma center and compare it with various existing state-of-the-art methods.
\end{itemize}

\section{Related Work}

In this section, we briefly review different sepsis prediction setups, machine learning methodologies used in the literature for early sepsis prediction, and existing techniques applied to address its class imbalance problem.

\subsection{Historical Sepsis Prediction Setup}

How the prediction setup for sepsis is framed has many major consequences for the utility of the predicted label and the model fitting, which has been discussed in depth by Lauritsen et al.~\cite{lauritsen:2021-consequences}. In the reviewed literature, there are three different setups used, that is, fixed time to onset, sequential, and sliding window. 

Most studies used a fixed time to onset framing for their prediction setup \cite{futoma:2017-improved, khojandi:2018, scherpf:2019, lauritsen:2020-explainable, ramchand:2020, khoshnevisan:2021}. For example, \cite{khojandi:2018} predicted onset in 12, 24, or 48 hours prior to onset. However, as mentioned by~\cite{liu:2019}, this setup is learning to predict onset at fixed intervals prior to onset, and thus they are often limited to these retrospective studies, where the time of onset is known a priori. Thereafter, in deployment, these models will need to assess patients' risk at regular interval as not to miss the chance to capture onset. The next most used prediction setup is sequential~\cite{futoma:2017, kaji:2019, moor:2019, lauritsen:2020-early}, which generates predictions at regular intervals using all the data available from admission to the time of prediction. This is more realistic in that it expects to assess patients at regular intervals and it considers all relevant information available. However, a major challenge is the variable observation window. Moreover, the observation window grows the longer the patient stays in the hospital. Thereafter, the amount of data to include can be quite large, and thus the data is usually aggregated to distill the information down.

Sliding window~\cite{kam:2017-learning, nemati:2018-interpretable, van:2019-improving} is similar to sequential, except that it uses a fixed time length prior to prediction for the observation window, making the shape of the input fixed and only considering the most recent data. With this approach, the raw EHR data can be used directly, preserving the temporally progressing patterns. Hence, sliding window approach is preferable for our purpose in discerning when the onset of sepsis occurs. However, there are unmet needs in medical infrastructures. For example, during night hours, the patient-provider ratios are much higher due to cross-coverage, and thus there is limited observations to each patient. An assistant prediction from machine learning in situations when we do not have full observations of each patient would be valuable. In this work, we focus on whether sepsis onset will occur within a 24-hour window with the help of most recent data collected at night. We time delivery of daily predictions for the early morning, right before routine morning rounds, so that medical staff can apply this information to patient care decisions throughout the day.

\subsection{Machine Learning for Sepsis Prediction}

In this subsection, we delve into the machine learning technologies used to predict sepsis and that used to address its class imbalance problem.

\subsubsection{Sepsis Prediction}

In the literature, various traditional machine learning methods have been adopted for sepsis prediction including but not limited to support vector machines~\cite{guillen:2015-predictive}, random forest~\cite{khojandi:2018}, and XGBoost~\cite{barton:2019-evaluation}. A well-known sepsis early warning system currently deployed in hospitals is Epic Sepsis Model (ESM). ESM is a penalized logistic regression model that produces a score indicating the potential of sepsis calculated every fifteen minutes. Recently, \cite{wong:2021} found that ESM performed poorly using data from Michigan Medicine. While there is still some utility of ESM scores to help medical decisions, there is a need for better early predictions of sepsis onset to be deployed in medical infrastructures. It should be noted that we aim to move away from early warning systems that produce frequent false alarms frustrating medical staff.

Recently, due to the advances in deep neural networks and their utility of automatic feature learning, deep learning methods have been increasingly applied to sepsis related prediction. For instance, \cite{futoma:2017, futoma:2017-improved} transformed the raw time-series data through a Multitask Gaussian Process (MGP), and then fed the latent function values through an RNN for the classification, while \cite{moor:2019} extended that to apply the Temporal Convolutional neural Network (TCN). \cite{kaji:2019} also employed a sequential approach using patients' vital signs, demographics, lab results, and medications to predict sepsis onset for the next day. The selected features are aggregated daily so that each day, up to 14 days, is one time step in the RNN. \cite{lauritsen:2020-early, lauritsen:2020-explainable} used an event-based data representation, where each event is formatted as a vector with three values (i.e., timestamp, name of event, and the value) and CNN-RNN or TCN are adopted for sepsis prediction. In this work, we focus more on addressing the severe class imbalance problem. Considering the prevalent promising performance of RNN in capturing temporal patterns for sepsis prediction, we adopt it as our partial deep learning architecture for nightly patient profile representation learning.

\subsubsection{Class Imbalance for Sepsis}

Most existing sepsis prediction studies~\cite{khojandi:2018, kaji:2019, moor:2019, lauritsen:2020-early, lauritsen:2020-explainable, teredesai2022sub, ewig2023multi} address the class imbalance problem in sepsis prediction by simply resampling the data to reduce the class imbalance. However, simple over or under sampling strategies do not solve the problem well~\cite{johnson2019survey}. Random oversampling achieves balance by repeating minority examples, often leading to overfitting on those examples in the minority class, and random undersampling removes majority examples sacrificing much of the available information. To better address the class imbalance problem, \cite{ramchand:2020} utilizes an ensemble of Long Short-Term Memory networks (LSTM). Specifically, multiple LSTM models are trained, capturing different characteristics of the data. Then, their predictions do ensemble to better capture sepsis prediction, which is obviously expensive. 

It should be noted that the development of deep learning has also advanced the machine learning techniques to address the class imbalance problem~\cite{johnson:2019-survey} in other domains but not sepsis. For example, in computer vision, \cite{cui:2019-class} proposed a balanced loss that weights the loss for a particular class inversely proportional to number of instances in that class and \cite{liu2022selfsupervised} showed that self-supervised representations are more robust to class imbalance than supervised representations. However, this has been rarely studied in early sepsis onset prediction, where EHR data are totally different from images in computer vision. Based on our proposed prediction setup, we develop a nightly patient profile representation learning using EHR data to address the serious class imbalance problem.

\section{Nightly Profile Representation Learning}

\begin{figure*}[t]
    \centering
    \includegraphics[width=\textwidth]{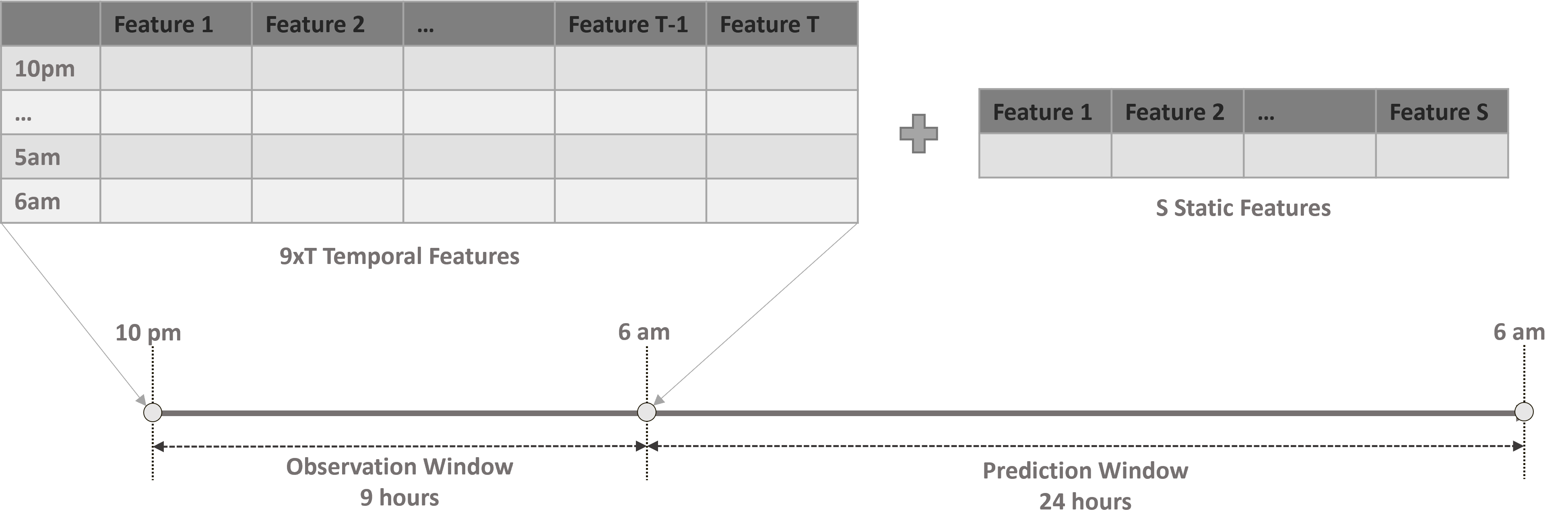}
    \caption{Sepsis Prediction Setup. Temporal features are extracted from the nighttime window between 10 p.m. and 6 a.m., forming a $9\times T$ matrix. $S$ static features which are consistent throughout the admission for each patient. Both are used for predicting sepsis onset within the subsequent 24 hour prediction window. }
    \label{fig:method_framing}
\end{figure*}

In this section, we first describe our nightly prediction setup for early sepsis prediction, which is more realistic in ICUs. Then, due to the resulting severe class imbalance problem making sepsis a rare event, we propose a patient profile representation learning method to capture the unique perspective of different patients at different nights.

\subsection{Early Sepsis Prediction Setup}

In this study, we propose to utilize data collected from patients at night to predict whether the onset of sepsis will occur within the next 24 hours. Specifically, data recorded between the hours of 10 p.m. and 6 a.m. are used to predict whether sepsis onset will occur in the next 24 hours (i.e., until 6 a.m. of the next day). We use data recorded during this nighttime window because 1) most staff are gone for the night so the model captures a gap in observation, 2) data collected at night is more likely to reflect the actual physiology of the patient as they are less exposed to external stimuli of hospital staff, 3) there are fewer interruptions to data collection as diagnostic procedures and interventions are typically planned during daytime hours, and 4) the night window is immediately adjacent to the period of time in which we predict sepsis onset. Each day in a typical ICU, teams of medical staff conduct morning rounds, a standard procedure in which staff will evaluate the current progress of patients and plan treatment for the next 24 hours. This is why we target predictions for early morning. This way, the deployed classifier will act as another decision support diagnostic tool similar to lab tests for staff to discern during rounds. This is much more desirable to clinicians compared to auxiliary monitoring systems that lead to alarm fatigue. Therefore, our setup is more practical and complements current ICU operating procedures. 

It should be noted that for each patient in ICUs, we are able to collect hourly vital signs (e.g., heart rate and body temperature) and hourly cumulative exposures (e.g., IV fluid bolus volume). Given this kind of hourly temporal data, our sepsis prediction problem can be framed as a multi-variate time-series classification task, which is beneficial to adopt RNN to capture temporal patterns for sepsis prediction. Some patient profile information (e.g., age, gender, and injury type) could be potential risk factors contributing to sepsis, but are static. Therefore, in our setting each observation contains 9 hours (i.e., 10 p.m. to 6 a.m.) of $T$ temporal feature values and $S$ static features as illustrated in Fig.~\ref{fig:method_framing}. Then, we aim to predict if sepsis will be onset in the next 24 hours from 6 a.m. of the current day to 6 a.m. the next morning as shown in Fig.~\ref{fig:method_framing}.

\subsection{NPRL for Class Imbalance}
\begin{figure*}[!ht]
    \centering
    \includegraphics[width=\textwidth]{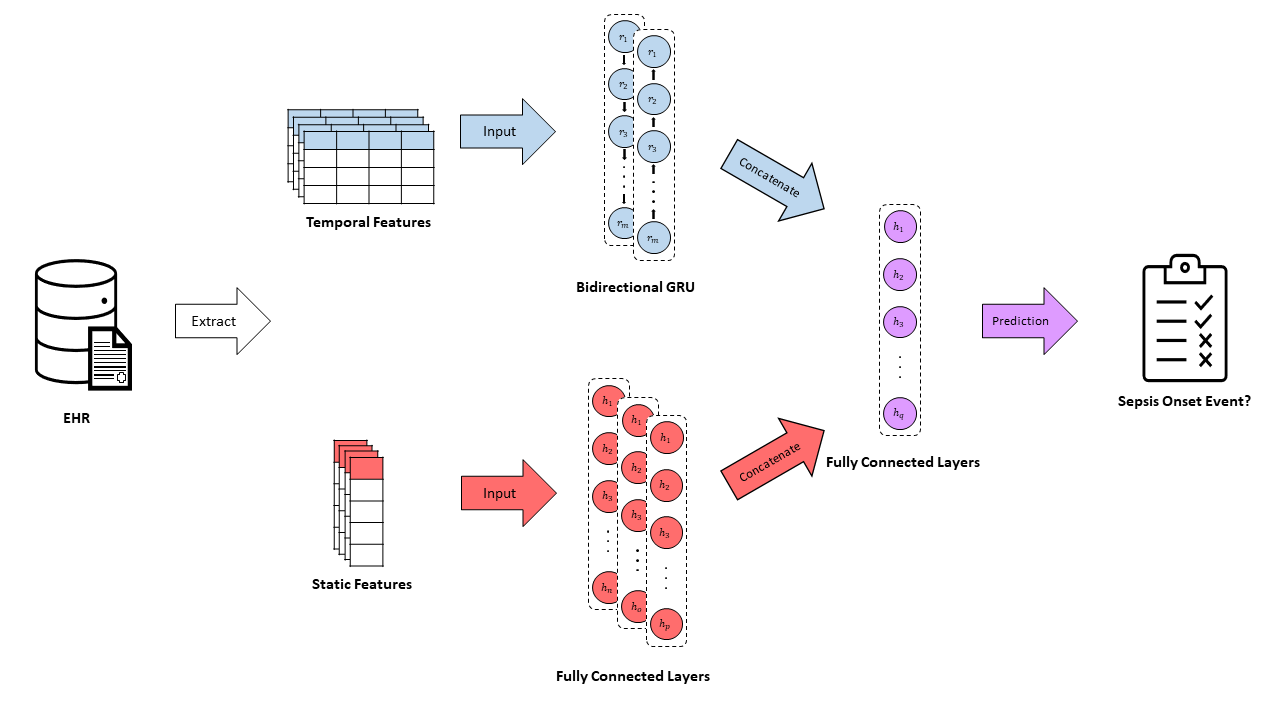}
    \caption{Multi-Modal RNN. Temporal data is processed using a bidirectional GRU layer (blue) to capture temporal information while static data is processed using a deep dense layer architecture (red). Results from each are combined via concatenation for further dense layers (purple).}
    \label{fig:mmr_architecture}
\end{figure*}

Given both the temporal and static data for each observation in our setting, it is natural to adopt a multi-modal RNN for sepsis prediction as illustrated in Fig.~\ref{fig:mmr_architecture}, which is denoted as model $f_\theta$ with parameters $\theta$. Concretely, the $9\times T$ matrix of temporal data is fed to a bidirectional RNN layer to capture the temporal information. LSTM~\cite{hochreiter:1997-long} and Gated Recurrent Unit (GRU)~\cite{cho:2014-learning} are often used, where GRU has been proved to be preferable in medical prediction tasks~\cite{choi:2017-using}. Moreover, bidirectional RNNs can further improve the performance \cite{ma:2017-dipole}. Therefore, we opt for a Bidirectional GRU layer (i.e., the blue component in Fig.~\ref{fig:mmr_architecture}) for our temporal representation learning. Then, the representation learned from the temporal features is concatenated with the representation learned from the $S$ static features using a deep dense layer architecture (i.e., the red component in Fig.~\ref{fig:mmr_architecture}) for the final classification of sepsis. 

Let $\{(x_i^T, x_i^S), y_i\}_{i=1}^n$ denote our training data set, where $x_i^T$ denotes the temporal data, $x_i^S$ denotes the static data, and $y_i\in\{1,\dots, C\}$ is the corresponding label with $C=2$ for sepsis. After feeding our temporal and static data to the multi-modal RNN model $f_\theta$, we can denote the representation in the last fully connected layer before classification as $\x_i=f_\theta(x_i^T,x_i^S)$ (i.e., the purple representation in Fig.~\ref{fig:mmr_architecture}). Then, the classification model can be learned by empirical risk minimization (ERM) as
\[\min_{\theta} \frac{1}{n} \sum_{i=1}^n \ell(\x_i,y_i;\theta)\]
where $\ell(\cdot)$ is a loss function and cross-entropy loss is popular in deep learning. For a $C$-class classification problem, the task is equivalent to
\begin{eqnarray}\label{eq:ori}
\min_{\theta} \sum_{c=1}^C\frac{n_c}{n}g(\x,y,c)
\end{eqnarray}
where $n_c$ shows the number of examples from the $c$-th class and \[g(\x,y,c)=\frac{1}{n_c}\sum_{i:y_i=c} \ell(\x_i,y_i;\theta)\]
According to the formulation, it is obvious that the loss will be dominated by the major class if there is some class $j$ such that $\forall c,c\ne j, n_j\gg n_c$. For example, if we have 100 examples for sepsis prediction with only 2 of them having sepsis, the loss will be dominated by those 98 non-sepsis examples, since we treat each example equivalently. Then, the model may directly predict each example as non-sepsis, which is useless.

Fortunately, in the following, we can show that the self-supervised representation learning can help alleviate the problem. Before we do the sepsis classification directly, we can conduct a self-supervised learning on $f_\theta$ without the sepsis classification layer to initialize the model's parameters as $\theta_0$. Specifically, given $\{(x_i^T, x_i^S)\}_{i=1}^n$ without sepsis assignments, we aim to uniquely identify each observation in the training data, which is corresponding to a night profile for a patient at a specific night and thus named as Nightly Profile Representation Learning (NPRL). Thereafter, $\theta_0$ can be learned by setting the classification layer as a $n$-class classification problem with training data as $\{(x_i^T, x_i^S), i\}_{i=1}^n$, where each instance is uniquely identified by the label $i$.

Given $f_{\theta_0}$, after changing the classification to sepsis, the sepsis classification model can be fine-tuned with a small learning rate and the optimization problem becomes
\begin{eqnarray}\label{eq:pretrain1}
\min_{\theta} \sum_{c=1}^C\frac{n_c}{n}g(\x,y,c) \quad s.t.\quad  \|\theta - \theta_0\|_F\leq \gamma
\end{eqnarray}
which is equivalent to
\begin{eqnarray}\label{eq:pretrain}
\min_{\theta} \sum_{c=1}^C\frac{n_c}{n}g(\x,y,c) + \frac{\lambda}{2}  \|\theta - \theta_0\|_F^2
\end{eqnarray}
Compared with the original objective in Eqn.~\ref{eq:ori}, a regularization term is introduced to constrain the difference between the learned model and the initialized model by NPRL. The benefit from NPRL initialized model can be demonstrated in the following theorem.

\begin{definition}~\cite{sohrab2003basic}
A function $h$ is $L$-Lipschitz continuous if 
\[\|h(x) - h(y)\|_2\leq L\|x-y\|_2\]
\end{definition}

For the sake of convenience, we denote $\mathbf{x}=\theta(x)$ as a simplified definition of $\mathbf{x}=f_{\theta}(x^T, x^S)$. 
\begin{thm}\label{thm:1}
Let $\theta(x)$ be $L$-Lipschitz in $x$ following \cite{DBLP:conf/iclr/SinhaND18} and $\theta^*$ denote the solution of Eqn.~\ref{eq:pretrain}. If $\gamma \leq \frac{1}{8L}$, we have
\begin{align*}
&\forall i,j,\quad \|\theta^*(x_i) - \theta^*(x_j)\|_2^2\\
&\geq\|\theta_0(x_i) - \theta_0(x_j)\|_2^2- \|\theta_0(x_i) - \theta_0(x_j)\|_2/2 - 1/32
\end{align*}
\end{thm}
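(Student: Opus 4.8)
The plan is to show that constraining the fine-tuned parameters to a small Frobenius ball around the NPRL initialization $\theta_0$ forces every learned representation to stay close to its initialized value, so that the pairwise separations built up by self-supervised pretraining cannot be destroyed. First I would exploit the equivalence between the penalized objective in Eqn.~\ref{eq:pretrain} and the constrained objective in Eqn.~\ref{eq:pretrain1}: the standard Lagrangian correspondence guarantees that, for the matching multiplier $\lambda$, the minimizer $\theta^*$ satisfies $\|\theta^* - \theta_0\|_F \le \gamma$. Next I would convert this parameter-space bound into a representation-space bound. Invoking the $L$-Lipschitz property from \cite{DBLP:conf/iclr/SinhaND18}, the network output moves at rate at most $L$ as the parameters move, so for every training point
\[\|\theta^*(x_i) - \theta_0(x_i)\|_2 \le L\|\theta^* - \theta_0\|_F \le L\gamma \le \tfrac{1}{8},\]
where the last step consumes the hypothesis $\gamma \le \frac{1}{8L}$.

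With this per-point estimate in hand, write $a = \theta_0(x_i) - \theta_0(x_j)$ for the initialized pairwise difference and $b = \theta^*(x_i) - \theta^*(x_j)$ for the fine-tuned one. A single triangle inequality then bounds the perturbation of the difference vector,
\[\|b - a\|_2 \le \|\theta^*(x_i) - \theta_0(x_i)\|_2 + \|\theta^*(x_j) - \theta_0(x_j)\|_2 \le \tfrac{1}{4}.\]
Expanding the squared norm and applying Cauchy--Schwarz to the cross term gives
\[\|b\|_2^2 = \|a\|_2^2 + 2\langle a, b-a\rangle + \|b-a\|_2^2 \ge \|a\|_2^2 - 2\|a\|_2\|b-a\|_2 \ge \|a\|_2^2 - \tfrac{\|a\|_2}{2}.\]
Since the discarded term $\|b-a\|_2^2$ is nonnegative and the claimed additive constant $-1/32$ is negative, the stated inequality follows; the constant simply absorbs the slack in this chain, and one may alternatively verify it by splitting into the cases $\|a\|_2 \ge 1/4$ and $\|a\|_2 < 1/4$ (in the latter the right-hand side is already nonpositive, while in the former the reverse triangle inequality yields $\|b\|_2^2 \ge (\|a\|_2 - 1/4)^2$).

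The main obstacle is the second step: translating the Frobenius constraint on the parameters into a uniform bound on each representation's movement. This is precisely where the Lipschitz hypothesis and the threshold $\gamma \le \frac{1}{8L}$ do their work, and it is the only place where the specific structure of $f_\theta$ enters---the remainder is elementary inner-product geometry. A secondary technical point worth checking carefully is the constrained-to-penalized equivalence, i.e.\ that the regularization weight $\lambda$ in Eqn.~\ref{eq:pretrain} indeed corresponds to a radius $\gamma$ satisfying the threshold; once $\theta^*$ is certified to lie in the $\gamma$-ball, the geometric argument above preserves the diversity of nightly profiles that NPRL instills, which is exactly the property needed to keep minority (sepsis) instances from being collapsed into the majority class.
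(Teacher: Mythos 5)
Your proposal is correct and follows essentially the same route as the paper: both hinge on reading the hypothesis as Lipschitzness of the representation with respect to the parameters, so that the Frobenius ball $\|\theta^*-\theta_0\|_F\le\gamma\le\frac{1}{8L}$ (via the constrained/penalized equivalence of Eqn.~\ref{eq:pretrain1} and Eqn.~\ref{eq:pretrain}) yields the per-point drift bound $\|\theta^*(x_k)-\theta_0(x_k)\|_2\le L\gamma\le 1/8$, followed by an inner-product expansion of the pairwise difference. The only difference is bookkeeping, and yours is in fact tighter: by writing $b=a+(b-a)$ and discarding the whole nonnegative term $\|b-a\|_2^2$, you get $\|b\|_2^2\ge\|a\|_2^2-\|a\|_2/2$ outright, whereas the paper expands a sum of three vectors, bounds the cross term $2\langle\theta^*(x_i)-\theta_0(x_i),\,\theta_0(x_j)-\theta^*(x_j)\rangle$ by $-2L^2\gamma^2$, and therefore needs the extra $-1/32$ slack that your argument renders unnecessary (your bound implies the stated one since $-1/32<0$).
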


\begin{proof}
\begin{align*}
&\|\theta^*(x_i) - \theta^*(x_j)\|_2^2\\ 
&= \|\theta^*(x_i) - \theta_0(x_i) + \theta_0(x_j)
- \theta^*(x_j)+\theta_0(x_i) - \theta_0(x_j)\|_2^2\\
&=\|\theta_0(x_i) - \theta_0(x_j)\|_2^2 + 2\langle \theta^*(x_i) - \theta_0(x_i), \theta_0(x_j) - \theta^*(x_j)\rangle\\ 
& +\|\theta^*(x_i) - \theta_0(x_i)\|_2^2 + 2\langle \theta^*(x_i) - \theta_0(x_i), \theta_0(x_i) - \theta_0(x_j)\rangle\\
&+ \|\theta_0(x_j) - \theta^*(x_j)\|_2^2 +2\langle \theta_0(x_j) - \theta^*(x_j),\theta_0(x_i) - \theta_0(x_j)\rangle\\
&\geq \|\theta_0(x_i) - \theta_0(x_j)\|_2^2-2 \|\theta^*(x_i) - \theta_0(x_i)\|_2\|\theta_0(x_j) - \theta^*(x_j)\|_2\\
&-2\|\theta^*(x_i) - \theta_0(x_i)\|_2\|\theta_0(x_i) - \theta_0(x_j)\|_2 \\
&-2\|\theta_0(x_j) - \theta^*(x_j)\|_2 \|\theta_0(x_i) - \theta_0(x_j)\|_2
\end{align*}
Due to the smoothness of $\theta(x)$, we have
\begin{align*}
&\|\theta^*(x_i) - \theta^*(x_j)\|_2^2\\
&\geq \|\theta_0(x_i) - \theta_0(x_j)\|_2^2 - 2 L^2\gamma^2 - 4L\gamma\|\theta_0(x_i) - \theta_0(x_j)\|_2 
\end{align*}
We can easily observe the result with $\gamma \leq \frac{1}{8L}$.
\end{proof}
\textbf{Remark:} The above theorem illustrates that fine-tuning based on a pre-trained model appropriately can preserve the diversity in the pre-trained representations, which can avoid the collapse of minor classes in a class imbalance problem. In our pre-trained model, we aim to uniquely identify each night of each patient. Therefore, for all night instances, we have
\[E_{\forall i,j\in \{1, \dots, n\}, i\neq j}[\theta_0(x_i)^\top \theta_0(x_j)]=0\]
With this observation from NPRL to uniquely identify each night of each patient, we have the bound as follows.
\begin{cor}\label{cor:1}
Let $\theta(x)$ be $L$-Lipschitz in $x$ and $\theta^*$ denote the solution of Eqn.~\ref{eq:pretrain}. If $\gamma \leq \frac{1}{8L}$ and $\theta_0$ is pre-trained with NPRL and $\|\theta_0(x)\|_2=\|\theta^*(x)\|_2=1$, we have
\[E_{\forall i,j\in \{1, \dots, n\},i\neq j}[\theta^*(x_i)^\top \theta^*(x_j)]\leq 0.37 \]
\end{cor}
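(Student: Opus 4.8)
The plan is to leverage Theorem~\ref{thm:1} together with the unit-norm constraint and the approximate orthogonality of the NPRL-initialized representations recorded in the Remark. First I would convert the squared-distance bound of Theorem~\ref{thm:1} into a statement about inner products. Since $\|\theta^*(x)\|_2 = \|\theta_0(x)\|_2 = 1$, the identity $\|a - b\|_2^2 = \|a\|_2^2 + \|b\|_2^2 - 2a^\top b$ gives $\|\theta^*(x_i)-\theta^*(x_j)\|_2^2 = 2 - 2\theta^*(x_i)^\top\theta^*(x_j)$ and likewise for $\theta_0$. Substituting both into the inequality of Theorem~\ref{thm:1} and rearranging would yield the per-pair upper bound
\[\theta^*(x_i)^\top\theta^*(x_j) \leq \theta_0(x_i)^\top\theta_0(x_j) + \tfrac{1}{4}\|\theta_0(x_i)-\theta_0(x_j)\|_2 + \tfrac{1}{64}.\]

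Next I would take the expectation over all pairs $i\neq j$. The NPRL property $E_{i\neq j}[\theta_0(x_i)^\top\theta_0(x_j)] = 0$ from the Remark immediately eliminates the first term on the right-hand side. The remaining work is to control $E_{i\neq j}[\|\theta_0(x_i)-\theta_0(x_j)\|_2]$, and the key idea is to pass from the norm to its square, where the orthogonality condition applies directly: by the same identity, $E_{i\neq j}[\|\theta_0(x_i)-\theta_0(x_j)\|_2^2] = 2 - 2E_{i\neq j}[\theta_0(x_i)^\top\theta_0(x_j)] = 2$.

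The main obstacle I anticipate is bridging the gap between the expected norm and the expected squared norm, since the per-pair bound contains $\|\theta_0(x_i)-\theta_0(x_j)\|_2$ but only its second moment is accessible through orthogonality. I would resolve this with Jensen's inequality: because the square root is concave, $E_{i\neq j}[\|\theta_0(x_i)-\theta_0(x_j)\|_2] \leq \sqrt{E_{i\neq j}[\|\theta_0(x_i)-\theta_0(x_j)\|_2^2]} = \sqrt{2}$. Assembling the pieces then yields $E_{i\neq j}[\theta^*(x_i)^\top\theta^*(x_j)] \leq \tfrac{\sqrt{2}}{4} + \tfrac{1}{64} \approx 0.37$, which matches the claim. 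A minor point to verify along the way is that the constants $1/2$ and $1/32$ of Theorem~\ref{thm:1} transfer correctly after the factor-of-two rescaling built into the norm identity, becoming the $1/4$ and $1/64$ that appear in the per-pair bound above.
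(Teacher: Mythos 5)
Your proposal is correct and follows essentially the same route as the paper's proof: expanding the squared distances via the unit-norm identity, invoking the NPRL orthogonality condition $E_{i\neq j}[\theta_0(x_i)^\top\theta_0(x_j)]=0$, and bounding $E_{i\neq j}[\|\theta_0(x_i)-\theta_0(x_j)\|_2]\leq\sqrt{2}$ by Jensen's inequality, yielding $\sqrt{2}/4 + 1/64 \leq 0.37$. The only cosmetic difference is that you rearrange to a per-pair inner-product bound before taking expectations, whereas the paper takes expectations of the expanded inequality directly; the constants $1/4$ and $1/64$ you flag do transfer exactly as you describe.
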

\begin{proof}
Due to Jensen's inequality, we have
\[E[\|\theta_0(x_i) - \theta_0(x_j)\|_2]\leq \sqrt{E[\|\theta_0(x_i) - \theta_0(x_j)\|_2^2]}=\sqrt{2}\]
Then, we can take it back to the inequality by expanding $\|\cdot\|_2^2$ as
\begin{align*}
&\forall i,j,\quad \|\theta^*(x_i)\|_2^2 + \|\theta^*(x_j)\|_2^2 - 2\theta^*(x_i)^\top \theta^*(x_j)\\
&\geq \|\theta_0(x_i)\|_2^2 + \|\theta_0(x_j)\|_2^2-2\theta_0(x_i)^\top \theta_0(x_j)\\
&- \|\theta_0(x_i) - \theta_0(x_j)\|_2/2 - 1/32
\end{align*}
By applying the expectation on both sides, we have
\begin{align*}
&E_{\forall i,j\in \{1, \dots, n\},i\neq j}[\theta^*(x_i)^\top \theta^*(x_j)]\\
&\leq E_{\forall i,j\in \{1, \dots, n\},i\neq j}[\theta_0(x_i)^\top \theta_0(x_j)]
+ E[\|\theta_0(x_i) - \theta_0(x_j)\|_2]/4\\
&+ 1/64 \leq 0+\sqrt{2}/4+1/64.
\end{align*}
Then, we can easily observe the bound.
\end{proof}
Explicitly, after fine-tuning based on the pre-trained model by NPRL, although the representations are no longer orthogonal in expectation, the diversity can be well preserved. That means the examples from the minor class will not be dominated by those from the major class.

In summary, we propose to do nightly patient profile representation learning, that is NPRL, in which we first conduct a self-supervised representation learning to uniquely identify each night of each patient. Then, the sepsis classification model is initialized by the pre-trained model from NPRL except the classification layer. Finally, the sepsis classification model can be learned by fine-tuning the pre-trained model with a small learning rate to alleviate the class imbalance problem.

\section{Experiments}

To demonstrate the proposed method, we evaluate it on level-1 trauma center data described as follows.

\subsection{Data Description}

We were provided de-identified Electronic Health Record (EHR) data pertaining to patients aged 16 years and older admitted into the ICU of UW Medicine Harborview Medical Center following injury between the years of 2012 and 2019\footnote{E-mail corresponding author for access to this dataset.}. In total, the data contains EHR data for 2,802 patients, 486 of which developed sepsis during their stay in the ICU, making up approximately 17\%. Sepsis is defined according to the 2016 international guidelines, Sepsis-3~\cite{
shankar:2016, 
singer:2016}, as a clinically suspected infection associated with worsening of organ dysfunction. \textbf{We restricted to include only infections that were identified between hospital days 3 and 14, as this is a period of time when patients are at risk for hospital acquired infections~\cite{monegro:2020-hospital} and still in the acute phase of critical illness.} Patients were labeled as having sepsis if they met the following two criteria within three days of each other.
\begin{itemize}
    \item Suspected of having an infection. Infection confirmed with a positive culture sample or by chart review.
    \item Exhibits worsening organ dysfunction denoted by a 2 point or greater increase in Sequential Organ Failure Assessment (SOFA)~\cite{vincent:1996} $(\Delta SOFA \geq 2)$.
\end{itemize}
The time of onset is determined by the time that the positive culture sample was drawn from the patient due to it being the most specific landmark for Harborview Medical Center clinicians' concern that an infection is present or developing.

The provided EHR data includes details about patients' demographics, information about their injuries, physiological signatures recorded hourly, therapeutics administered, and comorbidities. Although information about patient comorbidities were available in the retrospective data and may contribute to sepsis risk, these data are often not ascertained until after the hospital course. Our intention was to develop a model that could easily translate into the clinical setting; so we intentionally omitted data that would not be reliably available at the point of care. From this data, we constructed the features into subsets. Each of these subsets encapsulated one category of data identified as being a contributing factor to the development of sepsis. Subset 1 is vital signs of the patient aggregated each hour, subset 2 contains patient factors and initial physiology from the first 48 hours, and subset 3 is cumulative exposures treated as events accumulated over time. Subsets 1 and 3 are temporal, while subset 2 is static. The purpose of parsing the data into subsets is that subsets can be dynamically isolated and combined to test their contribution to prediction as well as their interaction with one another. Table~\ref{tab:feature_layers} summarizes the detailed feature information for each subset.

\begin{table}[t]
    \centering
        \caption{Feature subsets. * indicates that they are collected from and aggregated over the first 48 hours of the patient's admission.}
    \begin{tabular}{|c|c|}
        \hline
        Subsets & Features \\
        \hline
        Subset 1 & Heart Rate \\
         & Diastolic Blood Pressure (DBP) \\
         & Mean Arterial Pressure (MAP) \\
         & Respiratory Rate \\
         & Temperature \\
         & Fraction of Inspired Oxygen (FiO2) \\
        \hline
        Subset 2 & Age \\
         & Sex \\
         & Mechanism of Injury \\
         & Was Transferred from Another Hospital? \\
         & Has Head Injury? \\
         & First Systolic Blood Pressure in ED \\
         & Reverse Shock Index \\
         & Max Base Deficit* \\
         & Max Lactate* \\
         & Total Red Blood Cell Units Transfused* \\
         & Total Intravenous Crystalloid Volume (L) (Not in OR)* \\
         & APACHE II \\
         & Antibiotic Exposure* \\
         & Number of Surgeries* \\
         & Emergency Department Disposition \\
        \hline
        Subset 3 & IV Fluid Bolus Volume \\
         & Red Blood Cell Units Transfused \\
         & Ventilator Days \\
         & Number of Surgeries \\
         & Surgery Duration \\
        \hline
    \end{tabular}
    \label{tab:feature_layers}
\end{table}

EHR being a large, heterogeneous data format is susceptible to large amounts of missingness and sparsity. Another benefit of our prediction setup is a short observation window. In this case, it is often reliable to apply Last Observation Carry Forward (LOCF) to impute missing values, except for Mean Arterial Pressure that was calculated from Systolic and Diastolic Blood Pressure as $MAP = (2DBP + SBP)/3$ following \cite{sesso:2000-systolic}. Thereafter, SBP is no longer used due to the redundancy, and DBP and MAP being more clinically relevant~\cite{magder:2014-highs}.

\subsection{Instance Extraction and Inclusion}

After cleansing the data and parsing the features into subsets, we extract instances from the nighttime window to train and evaluate our proposed method. For each visit, we extract the temporal features between the hours of 10 p.m. and 6 a.m. for each night the patient is in the ICU for days 3 through 14. Day 1 and 2 are excluded, since hospital acquired infections develop after 48-72 hours of hospitalization~\cite{monegro:2020-hospital}. Infections before day 3 are considered to be acquired from the community or associated with healthcare; therefore it is not a problem of the trauma population. Windows after 14 days are also excluded as after 14 days, patients are no longer considered acutely ill but instead are considered to have chronic critical illness, which has different phenotype from sepsis in the acute recovery phase. In addition, instances containing any null values after LOCF are also excluded. For each visit, the static features are identified and added to each instance.

Lastly, above instances are labeled according to if the first occurrence of sepsis onset occurs within 24 hours after the observation window. While sepsis recidivism is a big problem as well, our setup targets the first sepsis event for the purpose of early prediction. Therefore, instances after the first onset for septic patients were excluded. This results in a total of 25,952 instances with 471 positive ones and 25,481 negative ones, where the serious class imbalance problem can be observed ($<2\%$ for sepsis).

\subsection{Experimental Setup}

To fairly evaluate the performance without bias, 5-fold stratified cross validation was used such that there are five folds of approximately 5,190 instances. Multiple iterations of training and testing are done so that each fold would be used as a test fold once while the other four folds are used for training. The folds are stratified to preserve the original class distribution. In addition, due to the class imbalance between negative and positive instances, the majority class would dominate the empirical risk minimization during the training. In this case, the model would learn to always predict negative. Therefore, during the training of the sepsis vs. non-sepsis classification except for experiments with class-balanced loss, we under-sampled the negative instances and over-sampled the positive instances for the training data only, each with 2,600.

We compare the proposed method NPRL with several existing state-of-the-art methods as follows.
\begin{itemize}
    \item XGBoost~\cite{chen:2016-xgboost}: the state-of-art traditional machine learning method with all available features flattened and concatenated as a long feature vector.
    \item LSTM w/ Attention~\cite{kaji:2019}: an existing RNN model that can do daily prediction but can only use temporal data.
    \item Multi-Modal RNN without NPRL: In GRU, each unit expands the representation of the $T$ features to a $256$-dimensional vector such that the output dimension of this layer is $9$x$512$; The hidden state of each recurrent unit is passed further down the network to incorporate information from each hour of the observation. As such, a flattening layer is used to flatten the hidden state from the recurrent layer to form a 4608-dimensional vector so it can be combined with hidden state from the static component. Static features are processed through a deep neural network comprised of an input layer, followed by three dense layers consisting of 16, 8, and 1 hidden unit(s), respectively. The results of each component are then combined via a concatenation layer that concatenates the hidden states together to form a 4609-dimensional vector that can then be used for classification.
    \item Multi-Modal RNN with Class Balance Loss from \cite{cui:2019-class}.
\end{itemize}

\begin{table*}[t]
    \centering
        \caption{Performance of XGBoost and Multi-Modal RNN using different feature subsets. The results are based on the aggregation (i.e., not average) of 5 test folds. The best is in bold and the 2nd best is underlined.}
    \begin{tabular}{|c|c|c|c|c|c|}
        \hline
        Feature Set & AUROC & True Positives & True Negatives & Sensitivity & Specificity \\\hline
        \multicolumn{6}{|c|}{XGBoost}\\\hline
        Subset 1 & 0.7061 & 287 & 17577 & 0.6093 & 0.6898 \\
        \hline
        Subsets 1+2 & 0.7130 & 296 & 17161 & 0.6285 & 0.6735 \\
        \hline
        Subsets 1+3 & 0.7398 & 315 & 17632 & 0.6688 & 0.6920 \\
        \hline
        Subsets 1+2+3 & 0.7429 & 317 & 17749 & 0.6730 & 0.6966 \\
        \hline
        \multicolumn{6}{|c|}{Multi-Modal RNN}\\
        \hline
        Subset 1 & 0.7097 & 295 & 17710 & 0.6263 & 0.6950 \\
        \hline
        Subsets 1+2 & 0.7100 & 300 & 17187 & 0.6369 & 0.6745 \\
        \hline
        Subsets 1+3 & \textbf{0.7794} & \textbf{346} & \underline{17793} & \textbf{0.7346} & \underline{0.6983} \\
        \hline
        Subsets 1+2+3 & \underline{0.7716} & \underline{332} & \textbf{18108} & \underline{0.7049} & \textbf{0.7106} \\
        \hline
    \end{tabular}
    \label{tab:results_mmrnn}
\end{table*}

\begin{table*}[t]
    \centering
        \caption{Performance comparison using subsets 1 and 3. The results are based on the aggregation (i.e., not average) of 5 test folds. The best is in bold and the 2nd best is underlined.}
    \begin{tabular}{|l|c|c|c|c|c|}
        \hline
        Model & AUROC & True Positives & True Negatives & Sensitivity & Specificity \\
        \hline
        XGBoost & 0.7398 & 315 & 17632 & 0.6688 & 0.6920 \\
        \hline
        LSTM w/ Attention~\cite{kaji:2019} & 0.7610 & \underline{381} & 15222 & \underline{0.8089} & 0.5974 \\
        \hline
        RNN & \underline{0.7794} & 346 & \underline{17793} & 0.7346 & \underline{0.6983} \\
        \hline
        RNN with Class Balanced Loss & 0.7267 & 316 & 17480 & 0.6709 & 0.6860 \\
        \hline
        RNN with Class Balanced Loss (Undersampling) & 0.7733 & 325 & \textbf{18028} & 0.6900 & \textbf{0.7075} \\
        \hline
        RNN with NPRL & \textbf{0.7870} & \textbf{390} & 15602 & \textbf{0.8280} & 0.6120 \\
        \hline
    \end{tabular}
    \label{tab:results_all_layer_1_3}
\end{table*}

For deep learning models, numerical features were each scaled to [0, 1] using Min-Max scaling. Moreover, the static component of the Multi-Modal RNN is excluded from the architecture if no static features are included in the corresponding configuration thus making it just an RNN. Lastly, each of these experiments were evaluated using area under the receiver operating characteristic curve (AUROC). AUROC is often preferred for evaluating binary classification results and is the most used metric to get a holistic understanding of the performance of the classifier. However, it does not provide any information about how the model is performing for positive and negative classes separately. Therefore, we also assess confusion matrix counts along with the sensitivity and specificity. Each of these metrics are calculated for the data in the test fold for each iteration of cross validation. In order to consider all these evaluation metrics extensively, instead of averaging over 5 folds, we aggregated them into one result. For example, when each fold is used as the test data, we have the corresponding number of true positives. Then, the total number of true positives is counted by the summation over all five test folds and the ROC curve is the intercept of each curve. It is mainly due to the reason that averaged values over counts like true positives are hard to interpret. Our code is available here\footnote{\href{https://github.com/ML4UWHealth/NPRL}{
https://github.com/ML4UWHealth/NPRL}}.

\subsection{Performance Comparison}

Before evaluating the efficacy of our proposed methodology, we first assessed the contribution factor each feature subset available in our data to identify an optimal combination by applying two baseline models (i.e., XGBoost and Multi-Modal RNN) to four different feature subset combinations (i.e., Subset 1 only, Subset 1 + Subset 2, Subset 1 + Subset 3, and Subset 1 + Subset 2 + Subset 3).

Table~\ref{tab:results_mmrnn} compares the performance of using different feature subsets. First, it can be observed from the comparison between feature sets that accumulated features in subset 3 can significantly improve the performance for both models. However, the contribution of static features is limited, which may indicate that recent temporal progressing pattern during the night is more useful to predict sepsis onset. Second, RNN with the ability to capture the temporal progressing pattern outperforms XGBoost that treats each feature independently, especially for positive cases. This confirms the effectiveness of temporal models on sepsis prediction as in the literature. Without the loss of common practice, we also report the average performance over the 5 test folds with the standard deviation in Table~\ref{tab:avgf}, in which the similar phenomenon as the aggregated results can be observed.

\begin{table}[ht]
    \centering
        \caption{Average performance of XGBoost and Multi-Modal RNN using different feature subsets. The average performance over 5 test folds with the standard deviation are reported, i.e., avg (std). The best is in bold and the 2nd best is underlined.}
    \begin{tabular}{|c|c|c|c|}
        \hline
        Feature Set & AUROC & Sensitivity & Specificity \\\hline
        \multicolumn{4}{|c|}{XGBoost}\\\hline
        Subset 1 & 0.7061 (0.0265) &  0.6093 (0.0382) & 0.6898 (0.0289) \\
        \hline
        Subsets 1+2 & 0.7130 (0.0265) & 0.6284 (0.0286) & 0.6735 (0.0262) \\
        \hline
        Subsets 1+3 & 0.7398 (0.0297) &  0.6688 (0.0399) & 0.6920 (0.0233) \\
        \hline
        Subsets 1+2+3 & 0.7429 (0.0314) &  0.6730 (0.0412) & 0.6966 (0.0198) \\
        \hline
        \multicolumn{4}{|c|}{Multi-Modal RNN}\\
        \hline
        Subset 1 & 0.7097 (0.0271) & 0.6264 (0.0478) & 0.6950 (0.0721) \\
        \hline
        Subsets 1+2 & 0.7100 (0.0301) & 0.6371 (0.0519) & 0.6745 (0.0757) \\
        \hline
        Subsets 1+3 & \textbf{0.7794 (0.0350)} & \textbf{0.7344 (0.0617)} & \underline{0.6983 (0.0219)} \\
        \hline
        Subsets 1+2+3 & \underline{0.7716 (0.0227)} & \underline{0.7048 (0.0352)} & \textbf{0.7106 (0.0304)} \\
        \hline
    \end{tabular}
    \label{tab:avgf}
\end{table}

\begin{table*}[ht]
    \centering
        \caption{Average performance comparison using subsets 1 and 3. The average performance over 5 test folds with the standard deviation are reported, i.e., avg (std). The best is in bold and the 2nd best is underlined.}
    \begin{tabular}{|l|c|c|c|}
        \hline
        Model & AUROC &  Sensitivity & Specificity \\
        \hline
        XGBoost & 0.7398 (0.0297) &  0.6688 (0.0399) & 0.6920 (0.0233) \\
        \hline
        LSTM w/ Attention~\cite{kaji:2019} & 0.7610 (0.0260) &  \underline{0.8088 (0.0605)} & 0.5974 (0.0705) \\
        \hline
        RNN & \underline{0.7794 (0.0350)} & 0.7344 (0.0617) & \underline{0.6983 (0.0219)} \\
        \hline
        RNN with Class Balanced Loss & 0.7267 (0.0318)  & 0.6710 (0.0417) & 0.6860 (0.0761) \\
        \hline
        RNN with Class Balanced Loss (Undersampling) & 0.7733 (0.0293) &  0.6900 (0.0910) & \textbf{0.7075 (0.0707)} \\
        \hline
        RNN with NPRL & \textbf{0.7876 (0.0198)} &  \textbf{0.8280 (0.0558)} & 0.6123 (0.0772) \\
        \hline
    \end{tabular}
    \label{tab:results_all_avg}
\end{table*}

With the above observations, in the following, we use only feature subsets 1 and 3 to compare the proposal NPRL with other baselines, which is also fair to LSTM w/ Attention~\cite{kaji:2019} that can only use temporal data. Table~\ref{tab:results_all_layer_1_3} shows the results. We can first observe through AUROC that most RNNs can outperform the traditional machine learning method XGBoost with a large margin as observed in the literature by capturing the temporal patterns. Except for RNN with the class-balanced loss from \cite{cui:2019-class}, which is because our problem is too imbalanced to be alleviated by the balanced loss. As well, balanced loss is often more helpful for multi-class classification. This is further demonstrated by additionally applying undersampling at first to bring the majority class to 2,600 instances, which improves the performance of the class-balanced RNN significantly. However, we can observe that the model is biased to non-sepsis cases, where the performance on true negatives is significantly higher but the performance on true positives is quite low. On the contrary, \cite{kaji:2019} significantly improves the performance on true positives for sepsis instances, but sacrifices too much on non-sepsis instances, which results in lower AUROC.

Nevertheless, our proposal with nightly profile presentation learning provides the best AUROC and further improves the sensitivity of the model in predicting the target event significantly, without sacrificing the performance on non-sepsis cases that much. Considering that ICUs prefer to use machine learning models as a pre-screening assistant, they care more about missed positive cases (i.e., never being checked in-depth by clinicians). We can also observe the similar results based on the average performance of the 5 test folds reported in Table~\ref{tab:results_all_avg}. This further demonstrates the efficacy of our proposed method to address serious class imbalance problem for early sepsis prediction.

\section{Conclusion}

In this study, we propose a novel prediction setup for early sepsis onset anticipation by machine learning, which is more applicable and deploy-able as a pre-screening assistant in ICUs. To solve the problem of serious class imbalance in the dataset, we propose to do nightly patient profile representation learning that uniquely identify each patient at each night. Based on such pre-trained model, the diversity between different examples can be highly preserved, so as to alleviate the problem of major class domination. The proposed methodology demonstrates promising results in improving the overall AUROC performance and model sensitivity in early prediction of sepsis, which will assist ICU staff in intervening early. In future work, we will also need to deploy the model and evaluate its applicability in a live setting. 

However, the high sensitivity comes from sacrificing of the specificity, although the highest AUROC is achieved and all trauma patients in the ICU will require attention during morning rounds,  where a false positive will impose only a tiny amount of additional attention. One potential factor is the over and under sampling procedure adopted, where certain negative ones have been ignored. Therefore, a future work is to eliminate the over and under sampling, but still addressing the class imbalance problem using a new loss function inspired by our theoretically analysis from NPRL. Moreover, data augmentation on images is an essential technique for the success of self-supervised learning in computer vision, which will be explored for EHR data as our future work for further improvement. Finally, we were not able to demonstrate our proposal on public data like MIMIC III~\cite{tsiklidis2022predicting} due to the lack of a precise sepsis onset timestamp, whose assignment is a valuable future work. 




\bibliography{ref}
\bibliographystyle{IEEEtran}

\end{document}